\DeclareMathOperator*{\argmin}{arg\,min}
\newcommand{\R}[1]{\mathbb{R}^{#1}}
\newcommand{\Es}[2]{\mathbb{E}_{#1}\left[#2\right]}
\newcommand{\JJ}{\mathcal{J}}
\newcommand{\system}[2]{\left \lbrace \begin{aligned}
& #1 \\ & #2
\end{aligned} \right.}
\newcommand{\pare}[1]{\left({#1}\right)}
\newcommand{\abs}[1]{\left \lvert{#1}\right \rvert}
\tikzset{cross/.style={cross out, draw=black, fill=none, minimum size=2*(#1-\pgflinewidth), inner sep=0pt, outer sep=0pt}, cross/.default={2pt}}
\begin{document}

\title{Reinforcement Learning for Variable Selection in a Branch and Bound Algorithm}

\titlerunning{Reinforcement Learning for Variable Selection}

\author{Marc Etheve\inst{1,3}\orcidID{0000-0003-4436-3391} \and
Zacharie Al\`es\inst{2,3}\orcidID{0000-0003-4602-2638} \and
C\^ome Bissuel\inst{1}\orcidID{0000-0002-5430-3168} \and 
Olivier Juan\inst{1}\orcidID{0000-0003-3445-4847} \and 
Safia Kedad-Sidhoum\inst{3}\orcidID{0000-0002-2184-2261}}

\authorrunning{M. Etheve et al.}

\institute{EDF R\&D, France \\
\email{\{marc.etheve, come.bissuel, olivier.juan\}@edf.fr}
    \and ENSTA Paris, Institut Polytechnique de Paris, France \\
    \email{zacharie.ales@ensta-paris.fr}
        \and CNAM Paris, CEDRIC, France\\
        \email{safia.kedad\_sidhoum@cnam.fr}}

\maketitle

\begin{abstract}
Mixed integer linear programs are commonly solved by \linebreak Branch and Bound algorithms. A key factor of the efficiency of the most successful commercial solvers is their fine-tuned heuristics. In this paper, we leverage patterns in real-world instances to learn from scratch a new branching strategy optimised for a given problem and compare it with a commercial solver.
We propose FMSTS, a novel Reinforcement Learning approach specifically designed for this task. The strength of our method lies in the consistency between a local value function and a global metric of interest. In addition, we provide insights for adapting known RL techniques to the Branch and Bound setting, and present a new neural network architecture inspired from the literature. To our knowledge, it is the first time Reinforcement Learning has been used to fully optimise the branching strategy. Computational experiments show that our method is appropriate and able to generalise well to new instances.
\keywords{Reinforcement Learning\and Mixed Integer Linear Programming \and Neural Network \and Branch and Bound \and Branching Strategy}
\end{abstract}

%%%%%%%%%%%%%%%%%%%%%%%%%%%%%%%%%%%%%%%%%%%%%%%%%%%%%%%%%%%%%%%%
\section{Introduction}
%%%%%%%%%%%%%%%%%%%%%%%%%%%%%%%%%%%%%%%%%%%%%%%%%%%%%%%%%%%%%%%%
Mixed Integer Linear Programming (MILP) is an active field of research due to its tremendous usefulness in real-world applications. The most common method designed to solve MILP problems is the Branch and Bound (B\&B) algorithm (see~\cite{wolsey} for an exhaustive introduction). B\&B is a general purpose procedure dedicated to solve any MILP instance, based on a divide and conquer strategy and driven by generic heuristics and bounding procedures. \\
Recently, a lot of attention has been paid to the interactions between MILP and machine learning. As pointed out in~\cite{ROAI:tour_horizon}, learning methods may compensate for the lack of mathematical understanding of the B\&B method and its variants~\cite{BranchAndPrice,BranchAndCut}. The plethora of different approaches in this young field of research gives evidence of the variety of ways in which learning can be leveraged. For instance, a natural idea is to bypass the whole B\&B procedure to directly learn solutions of MILP instances~\cite{learn_solution}. If one wants to preserve the optimality guarantee provided by the B\&B algorithm, a solution could be to rather learn the output of a computationally expensive heuristic used in a B\&B scheme~\cite{khalil_learningBranch,balcan_learningBranch,gasse_learningBranch}. Alternatively,~\cite{learningCut} suggests learning to select the best cut among a set of available cuts at each node of the B\&B tree. Whether it is by Imitation Learning or by Reinforcement Learning (RL), these solutions are often limited by their scope: they seek to take decisions according to a local criterion. \\

In the present work, we propose FMSTS (\textit{Fitting for Minimising the SubTree Size}), a novel approach based on Reinforcement Learning aiming at optimising a global criterion at the scale of the whole B\&B tree. We learn a branching strategy from scratch, independent of any heuristic.\\
The paper is structured as follows. First, we define the general setting of our study. Using RL to minimise a global criterion, we then demonstrate that, under certain assumptions, a specific kind of value functions enforces the optimality of such criterion. Next, we propose to adapt known generic learning methods and neural network architectures to the Branch and Bound setting. We illustrate our proposed method on industrial problems and discuss it before concluding.

%%%%%%%%%%%%%%%%%%%%%%%%%%%%%%%%%%%%%%%%%%%%%%%%%%%%%%%%%%%%%%%%
\section{General Setting}
%%%%%%%%%%%%%%%%%%%%%%%%%%%%%%%%%%%%%%%%%%%%%%%%%%%%%%%%%%%%%%%%

In real-world applications, companies often optimise systems on a regular basis given fluctuating data. This case has been studied in the literature for different purposes, such as learning an approximate solution~\cite{learn_solution} or imitating heuristics~\cite{gasse_learningBranch}. However, to our knowledge, no concrete contribution has been made regarding the use of Reinforcement Learning for variable selection (branching) in this setting. The present work fills this gap.\\

Throughout this paper, we are interested in the following setting. For a given problem $\mathcal{P}$, the instances are perceived as randomly distributed according to an unknown distribution $\mathcal{D}$. This distribution, emanating from real-world systems, governs the fluctuating data $(A,b,c)$ across instances, written as
\begin{equation}
    p \in \mathcal{P} : \system{\min_{x \in \R{n}} c^\top x}{s.t. \ 
    \begin{aligned}
        \ & Ax \leq b \\
        \ & x_\JJ \in \{0,1\}^{|\JJ|}, \ x_{-\JJ} \in \R{n-|\JJ|}
    \end{aligned}}
    \label{milp}
\end{equation}
with $A \in \R{m \times n}$, $b \in \R{m}$ and $c \in \R{n}$. In practice, as the instances come from a single problem, they share the same structure. Especially, the set of null coefficients, the number of constraints $m$, of variables $n$ and the set of binary variables $\JJ$ are the same for every instance of a given problem.\\

In this setting, we seek to learn and optimise a branching strategy to solve to optimality any instance of a given problem. As pointed out in~\cite{art:node_select_ML}, the problem of optimising the decisions along a B\&B tree is naturally formulated as a control problem on a sequential decision-making process. More specifically, it is equivalent to solving a finite-horizon deterministic Markov Decision Process (MDP) and may thus be tackled by Reinforcement Learning (see~\cite{SuttonBarto} for an introduction).

%%%%%%%%%%%%%%%%%%%%%%%%%%%%%%%%%%%%%%%%%%%%%%%%%%%%%%%%%%%%%%%%
\section{Fitting for Minimising the SubTree Size (FMSTS)}
%%%%%%%%%%%%%%%%%%%%%%%%%%%%%%%%%%%%%%%%%%%%%%%%%%%%%%%%%%%%%%%%

In a finite-horizon setting, Reinforcement Learning aims at optimising an agent to produce sequences of actions that achieve a global objective. The agent is guided by local costs associated with the actions it takes. Starting from an initial state in a possibly stochastic environment, it must learn to take the best sequence of actions and thus transitioning from state to state to minimise the overall costs. Exact Q-learning solves such problem by updating a table mapping (Q-function) from state/action pairs to discounted future costs (Q-values). \\
In the following, we define the RL problem of interest and, as exact Q-learning is not practicable, the approximate framework considered. Next, we propose a specific informative Q-function allowing us to use this framework in practice.

%%%%%%%%%%%%%%%%%%%%%%%%%%%%%%%%%
\subsection{Approximate Q-learning with Observable Q-values}
%%%%%%%%%%%%%%%%%%%%%%%%%%%%%%%%%

Let us denote by $\mathcal{S}$ the set of every reachable state for a given problem $\mathcal{P}$, a state being defined as all the information available when taking a branching decision in a B\&B tree. Under perfect information, a state associated to a B\&B node is the whole B\&B tree as it has been expanded at the time the branching decision is taken. We write $\mathcal{A}$ the set of actions, \textit{i.e.} the set of available branching decisions on a specific problem (the set of binary variables $\mathcal{A}=\JJ$ in our case) and $\pi$ a policy mapping any state to a branching decision:
$$
\pi \ : \ \system{\mathcal{S} \rightarrow \mathcal{A}}
                {s \mapsto \pi(s) = a}.
$$
The transitions between states are governed by the B\&B solver, and the MDP is regarded as deterministic: given an instance and a state, performing an action will always lead to the same next state. In practice, such assumption is met as soon as the solver's decisions (apart from branching) are non stochastic.\\
We call agent any generator $\Pi$ of branching sequences following policy $\pi$ and denote $\Pi(p)$ the sequence of decisions that maps an instance $p$ to a complete B\&B tree.
Let $\mu \pare{\Pi(p)}$ be any metric of interest on the tree generated by $\Pi$ for an instance $p$, and assume this metric is to be minimised. For instance, we can think of $\mu$ as the size of the generated tree, the number of simplex iterations, etc. In this setting, we are looking for the $\mu-$optimal agent $\Pi^*$ such that 
\begin{equation}
\Pi^* \in \argmin_{\Pi} \Es{p \sim \mathcal{D}}{\mu \pare{ \Pi \pare{p}}}.
\label{eq:generic_optimal_parameter}
\end{equation}
Note here that the expectation is only on the MILP instances, as the MDP is deterministic. \\

Let us assume that one can define a Q-function $Q^\pi$ which is consistent with $\mu$, in the sense that $\mu$ is minimised if $\pi(s) = \argmin_{a \in \mathcal{A}}Q^\pi\pare{s,a}$. Even in this case, exact Q-learning cannot be used to minimise $\mu$. First, maintaining an exact table for the Q-function is not tractable due to the size of $\mathcal{S}$, including for small real-world problems. Second, the transition from a state to the next is too complex to be modeled since it partly results from a linear optimisation.\\
To bypass these problems, we approximate the Q-function (see~\cite{SuttonBarto} for an introduction to Approximate Q-learning) by a neural network $\hat{Q}$ parametrised by $\theta$ and optimised by a dedicated gradient method as in the DQN (Deep Q-Network) approach~\cite{DQN}.
We define the policy $\pi_\theta$ resulting from the Q-network as $\pi_\theta(s) = \argmin_{a \in \mathcal{A}}\hat{Q}(s,a;\theta)$. \\
When facing a deterministic MDP, the exact Q-value $Q^{\pi_\theta}(s,a)$ of an action $a$ given a state $s$ and a policy $\pi_\theta$ is not stochastic and thus may be observable. In that case, the classic Temporal Difference loss used in~\cite{DQN} for training the Q-Network comes down to the simple expression
\begin{equation}
    L(\theta) = \Es{s,a\sim \rho(.)}{\pare{Q^{\pi_{\theta}}(s,a)-\hat{Q}(s,a;\theta)}^2}
    \label{eq:lossMFSTS}
\end{equation}
where $\rho$ is the behaviour distribution of our agent, as stated in~\cite{DQN}. Note that, in Equation \eqref{eq:lossMFSTS}, the observed Q-values are naturally influenced by parameter $\theta$ through the policy. Such loss is actually intuitive: if $Q^{\pi_\theta}$ is consistent with $\mu$, if each action has non-zero probability to be taken in any encountered state and if $L(\theta)=0$, then each B\&B tree built by agent $\Pi_\theta$ (following $\pi_\theta$) is optimal with respect to $\mu$ with probability 1.

%%%%%%%%%%%%%%%%%%%%%%%%%%%%%%%%%
\subsection{Using the Subtree Size as Value Function}
%%%%%%%%%%%%%%%%%%%%%%%%%%%%%%%%%

As highlighted in~\cite{gasse_learningBranch}, an important difficulty when applying Reinforcement Learning to B\&B algorithms is the credit assignment problem~\cite{credit_assignment_minsky}: in order to determine the actions that lead to a specific outcome, one may define non-sparse informative local costs (negative rewards) consistent with the global objective. This is not mandatory in a RL setting but may facilitate the learning task. \\

We choose the number of nodes in the generated tree as the global metric $\mu$. This metric is often used to compare B\&B methods (see for instance~\cite{khalil_learningBranch,balcan_learningBranch,gasse_learningBranch}), as it is a proxy for computational efficiency and independent of hardware considerations. \\
One of the main contributions of this paper is to propose a local Q-function $Q^\pi$ which is consistent with the chosen global metric $\mu$. We take advantage of the deterministic aspect of the environment and define $Q^\pi(s,a)$ as the size of the subtree rooted in the B\&B node corresponding to $s$ generated by action $a$ and policy $\pi$. As stated in Proposition~\ref{prop:suboptimalityMSTS}, this particular Q-function is not consistent in general with our choice of global criterion $\mu$. Nonetheless, Proposition~\ref{prop:optimalityMSTS} asserts its optimality when using Depth First Search as node selection strategy.
\begin{proposition}
In general, minimising the size of the subtree under any node in a B\&B tree is not optimal with respect to the tree size.
\label{prop:suboptimalityMSTS}
\end{proposition}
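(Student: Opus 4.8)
The plan is to prove the statement by exhibiting a counterexample, since it only asserts non-optimality \emph{in general}. Conceptually, the obstruction is that the subtree size $Q^\pi(s,a)$ is the value of action $a$ \emph{under the policy $\pi$ itself}, and not the optimal value: minimising it at a node $s$ accounts only for the nodes strictly below $s$, and ignores that exploring $s$ with a given action changes the incumbent, hence the pruning, and therefore the \emph{size} of subtrees rooted elsewhere in the tree. I would make this coupling the engine of the counterexample, and deliberately use a node-selection rule other than Depth First Search, so that the example stays consistent with the optimality guaranteed under DFS by Proposition~\ref{prop:optimalityMSTS}.

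Concretely, I would build a small B\&B tree explored with a best-bound node-selection strategy. At the root we branch on a variable $x_1$, producing a child $C_0$ with a weak linear relaxation bound and a child $C_1$ with a strong (smaller) bound; best-bound selection then processes $C_1$ first, so that no useful incumbent is available while $C_1$ is expanded. At $C_1$ I would arrange two admissible branching actions: action $\alpha$ generates a small subtree but only yields a poor feasible solution, whereas action $\beta$ generates a slightly larger subtree but yields a much stronger incumbent. Afterwards $C_0$ is processed, and its size is dictated by the incumbent found in $C_1$: under the poor incumbent of $\alpha$ it cannot be pruned and expands into a large subtree, while under the strong incumbent of $\beta$ its relaxation bound exceeds the incumbent and it is pruned immediately.

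It then remains to count nodes. The greedy policy, which at $C_1$ selects the action minimising the local subtree size, picks $\alpha$ and pays a large total because $C_0$ blows up; the alternative policy that branches $x_1$ at the root and then plays $\beta$ at $C_1$ accepts a marginally larger subtree at $C_1$ but prunes $C_0$ at once, for a strictly smaller total. I would choose the magnitudes so that this second total is strictly below the greedy one, which immediately shows that the greedy policy associated with the subtree-size Q-function does not minimise $\mu$, i.e. this Q-function is not consistent with the tree size in general.

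The main obstacle is turning this schematic tree into a genuine, reproducible instance: I would need to pick explicit data $(A,b,c)$ so that the relaxation bounds of $C_0$ and $C_1$, the two incumbent values reachable at $C_1$, and the resulting subtree sizes all take the prescribed values, and to check that the greedy fixed point indeed commits to action $\alpha$. Some care is also required to state the node-selection rule precisely, since---as Proposition~\ref{prop:optimalityMSTS} shows---the very same greedy criterion becomes optimal under DFS; the counterexample must therefore hinge on a processing order in which a sibling subtree is expanded \emph{after} the incumbent-producing one.
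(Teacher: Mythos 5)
Your overall strategy (a counterexample under a non-DFS node-selection rule, exploiting the coupling between subtrees through the incumbent) is exactly the right spirit -- the paper itself omits the proof and merely hints that Breadth First Search yields such a counterexample. However, the engine of your construction is impossible as described. You process the subtree under $C_1$ \emph{entirely} before touching $C_0$ ("no useful incumbent is available while $C_1$ is expanded... Afterwards $C_0$ is processed"). But a complete B\&B exploration of $C_1$'s subtree with a zero gap and no external incumbent cannot miss the optimal solution of $C_1$'s feasible region: a node containing that optimum can only be pruned by bound if an equally good incumbent has already been found. Hence both actions $\alpha$ and $\beta$ deliver the \emph{same} incumbent value (the optimum over $C_1$'s region) by the time $C_0$ is opened; whether $C_0$ is pruned is therefore independent of the action taken at $C_1$, and the greedy action $\alpha$ gives the strictly smaller total tree. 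Your counterexample collapses. Worse, the processing order you describe (fully expand $C_1$'s subtree, then $C_0$) \emph{is} a DFS order on this tree, so the argument in the proof of Proposition~\ref{prop:optimalityMSTS} -- that the best primal bound of any fully expanded subtree is policy-independent -- applies verbatim and proves that the greedy choice is optimal in your own example.

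The missing idea is that the counterexample must hinge on the \emph{timing} of incumbent discovery, not on its final value, and this requires genuine interleaving: nodes of the sibling subtree must be processed \emph{before} the subtree under $C_1$ is fully expanded. This is what BFS provides naturally: exploration proceeds level by level across the whole tree, so a branching choice at $C_1$ that minimises its own subtree size but places integer-feasible leaves deep (hence discovered at late levels) lets the sibling's descendants at intermediate levels escape pruning, whereas a slightly larger subtree exposing a strong incumbent at a shallow level prunes those descendants immediately. Your best-bound variant can also be made to work, but only if some descendants of $C_1$ have worse LP bounds than $C_0$, forcing $C_0$ to be processed mid-expansion of $C_1$'s subtree -- precisely the situation your sequential ordering rules out.
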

\noindent The proof is omitted for the sake of conciseness, but one can prove that minimising the subtree size can be sub-optimal when using Breadth First Search as the node selection strategy.
\begin{proposition}
When using Depth First Search (DFS) as the node selection strategy, minimising the whole B\&B tree size is achieved when any subtree is of minimal size.
\label{prop:optimalityMSTS}
\end{proposition}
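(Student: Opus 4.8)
The plan is to establish the principle of optimality (optimal substructure) for the B\&B tree under DFS: I would show that a policy which makes \emph{every} subtree of minimal size produces a tree of minimal size at the root, which is exactly the whole tree. Write $|T(v)|$ for the size of the subtree rooted at node $v$. Binary branching gives the recursion $|T(v)| = 1 + |T(v_1)| + |T(v_2)|$ for an internal node with children $v_1,v_2$ (and $|T(v)|=1$ for a leaf), so the whole-tree size is just $|T(\text{root})|$. The crux is to argue that the minimal total size \emph{factorises} over the children, so that minimising each subtree independently --- as the greedy policy $\pi(s)=\argmin_{a}Q^\pi(s,a)$ does --- cannot conflict with global minimality. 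The proof then closes by well-founded induction on the finite tree: assuming the greedy policy attains the minimal subtree size at every strict descendant, it attains it at $v$, and hence at the root.

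The key lemma, and the main obstacle, is the decoupling of sibling subtrees, which is precisely where DFS is used (and where BFS would fail, cf.\ Proposition~\ref{prop:suboptimalityMSTS}). The only channel through which branching decisions taken in one subtree can influence the size of another is the incumbent used for bound-based pruning. I would prove that, under DFS, the incumbent value available when a node $v$ is first entered depends only on $v$'s position in the tree (the branching constraints along the root-to-$v$ path) and on the incumbent at the start of the search --- not on the branching order chosen inside the sibling subtrees explored before $v$. The argument is that DFS explores each subtree to completion before moving on, and completely exploring the subtree rooted at a node always yields the optimal integer value of that node's subproblem as an incumbent candidate (a node is pruned only when its bound cannot beat the incumbent, so no improving solution is ever discarded). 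Hence, after a left sibling is fully processed, the incumbent equals the minimum of the entering incumbent and that sibling's subproblem optimum, a quantity fixed by the tree positions alone.

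Given this lemma, I would argue that, for a fixed state $s$ entering $v$ and a fixed branching action $a$ at $v$, the states relevant to the sizes of all of $v$'s children are determined by $(s,a)$ alone: the first child's state by $(s,a)$ directly, and each later child's pruning incumbent by $(s,a)$ via the lemma, independently of how the earlier children were branched. Consequently the minimisation over the joint sub-policy factorises,
\begin{equation*}
\min_{\pi_1,\pi_2}\bigl(1 + |T(v_1)| + |T(v_2)|\bigr) = 1 + \min_{\pi_1}|T(v_1)| + \min_{\pi_2}|T(v_2)|,
\end{equation*}
and the greedy action at $v$ realises this minimum. Plugging this into the induction completes the proof.

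I expect the main obstacle to be the incumbent-independence lemma: making precise the sufficient statistic that governs a subtree's size (node constraints plus pruning incumbent) as opposed to the full-information state of the paper (the entire expanded tree), and carefully handling the pruning rule --- including ties at equality between an LP bound and the incumbent --- so that ``full exploration yields the subproblem optimum regardless of branching order'' holds without exception. Everything else is a routine dynamic-programming induction once the decoupling is in place.
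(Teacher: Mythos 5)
Your proposal is correct and follows essentially the same route as the paper's proof: both rest on the key observation that under DFS sibling subtrees interact only through the incumbent (primal bound), and that full exploration of a subtree yields its subproblem optimum regardless of the branching policy inside it, so the incumbents are policy-independent and the minimisation decouples across subtrees. The only cosmetic differences are that you wrap the decoupling in an explicit well-founded induction over the tree (the paper instead decomposes the total size over the ordered list of open nodes at an arbitrary iteration) and that you are slightly more careful about the tie case, where the incumbent after a sibling is the minimum of the entering incumbent and the sibling's optimum rather than the optimum itself.
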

\begin{proof} 
Let us call $\mathcal{O}$ the set of open nodes at a given iteration of the B\&B process for a specific instance. The set of closed nodes (either by pruning or branching) is denoted $\mathcal{C}$. \\
We write $V^\pi\pare{s \left\lvert \zeta, \eta \right.}$ the size of the subtree under $s$, entirely determined by the policy $\pi$ followed in this subtree, a set of primal bounds $\zeta$ found in other subtrees and a node selection strategy $\eta$. When using DFS, the subtrees under each open node are expanded and fully solved sequentially, thus we can assume with no loss of generality that $\mathcal{O}$ is equal to $\{1,...,k \}$ and is sorted according to the planned visiting order. In that case, the size $V$ of the whole B\&B tree can be expressed as
$$
V = \abs{\mathcal{C}} + \sum_{i=1}^k V^{\pi_i}\pare{s_i \ \left\lvert \{z_0\} \cup \pare{ \bigcup_{j<i} \zeta_j}, \  \eta=DFS\right.}
$$
with $\zeta_i$ the set of all bounds to be found in the subtree rooted in $s_i$ and $z_0$ the best bound obtained in $\mathcal{C}$.\\
It remains to prove that $\pi_1$ is optimal only if it leads to the minimal subtree under $s_1$.
As two separate subtrees can only affect each other through their best primal bound under DFS, we have
$$
V= \abs{\mathcal{C}} + V^{\pi_1}\pare{s_1 \ \left\lvert \{z_0\}, \  \eta=DFS\right.} + \sum_{i=2}^k V^{\pi_i}\pare{s_i \ \left\lvert \{z_{i-1}\}, \  \eta=DFS\right.}
$$
with $z_{i-1} = \min \left\lbrace \{z_0\} \cup \pare{ \bigcup_{j<i} \zeta_j} \right\rbrace$.\\
Since the B\&B procedure (with a gap set to zero) guarantees that we find the best primal bound of any expanded subtree, $\pare{z_i}_{i=1}^k$ are completely independent of the branching policies, which gives, for any $\pi_j, \ j \in \{2,...,k\}$:
$$
\argmin_{\pi_1 \in \Pi_1} V = \argmin_{\pi_1 \in \Pi_1} V^{\pi_1}\pare{s_1 \ \left\lvert \{z_0\}, \  \eta=DFS\right.}
$$ 
with $\Pi_1$ the set of all valid branching policies under $s_1$. Therefore, choosing any other policy than $\pi_1 \in \argmin_{\pi_1 \in \Pi_1} V^{\pi}\pare{s_1 \ \left\lvert \{z_0\}, \  \eta=DFS\right.}$ is sub-optimal with respect to the tree size.\\ \mbox{}\hfill$ \qed $
\end{proof}
In the remaining, we use DFS as the node selection strategy according to Proposition~\ref{prop:optimalityMSTS}. We now focus on optimising the branching strategy (variable selection) to minimise at each node the size of the underlying subtree. If we write $D_0^{\pi(s)}(s)$ and $D_1^{\pi(s)}(s)$ the child nodes of $s$ following policy $\pi$, such value function satisfies the Bellman Equation \eqref{eq:bellman}. The relationship between the value and the Q-function is trivially defined by $Q^\pi(s,a) = 1 + V^\pi(D_0^a(s)) + V^\pi(D_1^a(s))$.\\
\begin{equation}
    V^\pi(s) = 1 + V^\pi \pare{D_0^{\pi(s)}(s)}  + V^\pi \pare{D_1^{\pi(s)}(s)}
    \label{eq:bellman}
\end{equation}
This value function has two advantages. First, it is observable as assumed earlier: we only need to count the number of inheriting nodes once the B\&B tree is fully expanded. Second, it is a local objective which guarantees the optimality of a global criterion, hence allowing us to perform RL without designing a sub-optimal reward using any domain knowledge.

%%%%%%%%%%%%%%%%%%%%%%%%%%%%%%%%%
\subsection{Algorithm}
%%%%%%%%%%%%%%%%%%%%%%%%%%%%%%%%%

Using Approximate Q-learning and the subtree size as value function leads
us to propose the FMSTS algorithm (Algorithm \ref{algo:FMSTS}). Using Experience Replay and $\varepsilon$-greedy exploration as in~\cite{DQN}, the algorithm essentially boils down to consecutively solving a MILP instance following the current policy or random choices with probability $\varepsilon$, fitting the observed values sampled from an experience replay buffer and iterating with the updated policy.

\begin{algorithm}[H]
\caption{FMSTS}
\begin{algorithmic}
    %\State \textbf{Init :} $\theta_0$, $\pare{p_i}_{i=1}^n$, $\mathcal{B}$\\
 \FOR{t = 0,...,N-1}
  \STATE Draw randomly an instance $p$.
  \STATE Solve $p$ following $\pi_{\theta_{t}}$ with $\varepsilon$-greedy exploration.
  \STATE Collect experiences along the generated tree $\pare{s_i, a_i, Q^{\pi_{\theta_t}}(s,a),\hat{Q}(s,a;\theta_t)}$ and store them into an experience replay buffer $\mathcal{B}$.
  \STATE Update to $\theta_{t+1}$ using loss \eqref{eq:lossMFSTS} on experiences drawn from $\mathcal{B}$.
  \ENDFOR
 \end{algorithmic}
 \label{algo:FMSTS}
\end{algorithm}

%%%%%%%%%%%%%%%%%%%%%%%%%%%%%%%%%%%%%%%%%%%%%%%%%%%%%%%%%%%%%%%%
\section{Adapting Learning to the Branch and Bound Setting}
%%%%%%%%%%%%%%%%%%%%%%%%%%%%%%%%%%%%%%%%%%%%%%%%%%%%%%%%%%%%%%%%

To ensure the success of the FMSTS method (Algorithm~\ref{algo:FMSTS}) with respect to the objective \eqref{eq:generic_optimal_parameter}, we need to adapt some components to the Branch and Bound setting. First, we adapt the loss guiding the neural network's training. Next, we use Prioritized Experience Replay while normalising probabilities. Last, we propose a new neural network architecture inspired from the literature.

%%%%%%%%%%%%%%%%%%%%%%%%%%%%%%%%%
\subsection{Minimising an Expectation on the Instance Distribution}
%%%%%%%%%%%%%%%%%%%%%%%%%%%%%%%%%

The loss defined by Equation~\eqref{eq:lossMFSTS} does not seem to correspond to our objective~\eqref{eq:generic_optimal_parameter}. Indeed, it naturally gives more importance to the biggest trees, which can be heavily instance dependent. To neutralise this effect, we weight the loss by the inverse of the size of the corresponding B\&B tree generated by the agent:
\begin{equation}
    L(\theta) = \Es{s,a\sim \rho(.)}{\frac{1}{V^{\pi_\theta}\pare{r(s)}}\pare{Q^{\pi_\theta}(s,a)-\hat{Q}(s,a;\theta)}^2}
    \label{eq:wlossMFSTS}
\end{equation}
with $r(s)$ the root node of the tree containing $s$, such that $V^{\pi_\theta}\pare{r(s)}$ corresponds to the size of this tree. Then, any instance has equal weight in loss~\eqref{eq:wlossMFSTS}.

%%%%%%%%%%%%%%%%%%%%%%%%%%%%%%%%%
\subsection{Performing Prioritized Experience Replay}
%%%%%%%%%%%%%%%%%%%%%%%%%%%%%%%%%

Prioritized Experience Replay~\cite{PER} biases the uniform replay sampling of Experience Replay~\cite{DQN} towards experiences with high Temporal Difference errors, \textit{i.e.} when the predicted Q-values are far from their target. In FMSTS, an experience is a 4-tuple $\pare{s_j,a_j,Q^{\pi_{\theta_j}}(s_j,a_j), \hat{Q}(s_j,a_j;\theta_j)}$ and the target $Q^{\pi_{\theta_j}}(s_j,a_j)$ is observed, which reduces the error to the simple expression $\abs{Q^{\pi_{\theta_j}}(s_j,a_j) - \hat{Q}(s_j,a_j;\theta_j)}$.\\
In the context of sampling experiences in a B\&B tree, one should take into account that the scale of the target $Q^{\pi_{\theta_j}}$ may vary exponentially both along the tree and across instances. As the scale of the error may likely vary with that of the target, we normalise this error by the target to get the sampling probability in the experience replay buffer
\begin{equation}
    p_j \propto \frac{\abs{Q^{\pi_{\theta_j}}(s_j,a_j) - \hat{Q}(s_j,a_j;\theta_j)}}{Q^{\pi_{\theta_j}}(s_j,a_j)}.
\end{equation}

%%%%%%%%%%%%%%%%%%%%%%%%%%%%%%%%%
\subsection{Designing a Regressor for the Q-function}
%%%%%%%%%%%%%%%%%%%%%%%%%%%%%%%%%

As in~\cite{khalil_learningBranch}, we use both static and dynamic features to represent a state. Although many features may be relevant for the states' encoding, we opted to keep them limited in the present work. For static information, we perform a dimension reduction by PCA~\cite{PCA}: each instance is represented as the concatenation of its data $(A,b,c)$ and PCA is applied on the resulting vectors. Our representation also includes the following dynamic features: the node's depth, the distance of the current primal solution to the bounds and the branching state. Concretely, the branching state $B$ is one-hot encoded in a $3\abs{\JJ}$ vector. Let us call $\mathcal{B}_0$ and $\mathcal{B}_1$ the set of variables that have been respectively set to 0 and 1 in the ascendant nodes of the current state.
With no loss of generality, let us assume that $\JJ = \{1,...,J\}$. Then we have $B_j = \mathbb{1}_{x_j \in \mathcal{B}_0}$, $B_{j+J} = \mathbb{1}_{x_j \in \mathcal{B}_1}$ and $B_{j+2J} = 1-B_j-B_{j+J}$ for any $j \in \JJ$.\\
The chosen Q-function is essentially multiplicative, in the sense that the ratio between the targets in two consecutive states may be of magnitude 2 due to the binary tree structure. In addition, the scale of these targets may strongly vary between instances. A basic feedforward neural network, based on summations, may struggle to handle such phenomena. To compensate for these effects and adapt to the B\&B setting, we take inspiration from the Dueling architecture of~\cite{dueling} and propose the Multiplicative Dueling Architecture (MDA). As shown in Figure~\ref{fig:MDA}, MDA implements the product between the $1$-D output of a block of fully-connected layers fed with static features and the $\abs{\JJ}$-D output of a block fed with both static and dynamic features. A linear activation on the $1$-D output allows our agent to capture the variability of the chosen Q-function. 
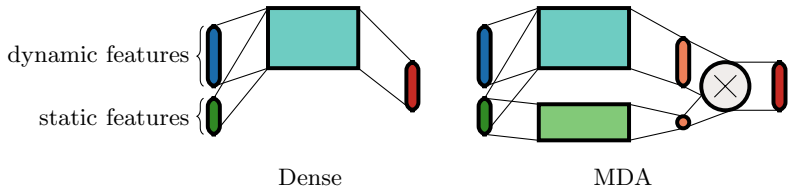
\begin{figure}
\centering
\begin{tikzpicture}[scale=0.8]
    \newcommand{\distLayer}{.8}
    \newcommand{\widthLayer}{.2}
    \newcommand{\widthBlock}{1.5}
    \newcommand{\heightBlock}{1}
    \newcommand{\heightQ}{.8}
    \newcommand{\heightBlockResidual}{.6}
    \newcommand{\rayonSign}{\heightQ/2}
    \newcommand{\linewidthLayer}{1.5}

    \definecolor{colorNonTrain}{rgb}{.94, .93, .92}
    \definecolor{colorPreOutput}{rgb}{.93, .5, .35}
    \definecolor{colorOutput}{rgb}{.78, .17, .14}
    \definecolor{colorInputStatic}{rgb}{.19, .54, .14}
    \definecolor{colorInputDynamic}{rgb}{.1, .42, .67}
    \definecolor{colorBlockStatic}{rgb}{.51, .79, .47}
    \definecolor{colorBlockGlobal}{rgb}{.38, .61, .78}
    \definecolor{colorBlockGlobal}{rgb}{.43, .8, .76}
    
    %%%% Dense Block %%%%
    \draw (\widthLayer/2,0) -- (\widthLayer+\distLayer,.3) ;
    \draw (\widthLayer/2,1) -- (\widthLayer+\distLayer,.3+\heightBlock) ;
    \draw (\widthLayer/2,-.2) -- (\widthLayer+\distLayer,.3+\heightBlock) ;
    \draw (\widthLayer/2,-.8) -- (\widthLayer+\distLayer,.3) ;
    \draw [fill=colorBlockGlobal, draw=black, line width=\linewidthLayer]
        (\widthLayer+\distLayer,.3) --
        ++(\widthBlock,0) --
        ++(0,\heightBlock) --
        ++(-\widthBlock,0) -- 
        ++(0,-\heightBlock) --
        cycle
        {};
    %%%% Input %%%%
    \fill [colorInputDynamic,rounded corners=3.1, draw=black, line width=\linewidthLayer]
        (0,0) --
        ++(\widthLayer,0) --
        ++(0,1) --
        ++(-\widthLayer,0) --
        cycle
        {};
    
    \fill [colorInputStatic,rounded corners=3.1, draw=black, line width=\linewidthLayer]
        (0,-.2) --
        ++(0,-\heightBlockResidual) --
        ++(\widthLayer,0) --
        ++(0,\heightBlockResidual) --
        cycle
        {};
    \draw[decorate, decoration={brace},xshift=-2] (0,0) -- (0,1) node [black,midway, left,xshift=-2] {dynamic features};
    \draw[decorate, decoration={brace,mirror},xshift=-2] (0,-.2) -- (0,-.2-\heightBlockResidual) node [black,midway, left,xshift=-2] {static features};

    %\node[align=left] at (0,-.2-\heightBlockResidual -.5) {Dense};
    \node[align=left] at (0.5*2.5+0.5*\distLayer + 0.5*\widthLayer/2,-.2-\heightBlockResidual -.7) {Dense};
    
    %%%% Output %%%%
    \draw (2.5,.3) -- (2.5+\distLayer + \widthLayer/2,-\heightQ/2) ;
    \draw (2.5,1.3) -- (2.5+\distLayer + \widthLayer/2,\heightQ/2) ;
    
    \fill [colorOutput,rounded corners=3.1, draw=black, line width=\linewidthLayer]
        (2.5+\distLayer,-\heightQ/2) --
        ++(\widthLayer,0) --
        ++(0,\heightQ) --
        ++(-\widthLayer,0) --
        cycle
        {};

    %%%%%%%%%%%%%%%%%%%%%%%%%%%%%%%%%%%%%%%%%%%%%%%%%%%%%%%%%%%%
    \newcommand{\distFig}{1}
    \newcommand{\deltaFig}{\widthLayer + \distLayer + \widthBlock + \distLayer + \widthLayer + \distFig}
    
    %%%% Dense Block %%%%
    \draw (\deltaFig+\widthLayer/2,0) -- (\deltaFig+\widthLayer+\distLayer,.3) ;
    \draw (\deltaFig+\widthLayer/2,1) -- (\deltaFig+\widthLayer+\distLayer,.3+\heightBlock) ;
    \draw (\deltaFig+\widthLayer/2,-.2) -- (\deltaFig+\widthLayer+\distLayer,.3+\heightBlock) ;
    \draw (\deltaFig+\widthLayer/2,-.8) -- (\deltaFig+\widthLayer+\distLayer,.3) ;
    \draw [fill=colorBlockGlobal, draw=black, line width=\linewidthLayer]
        (\deltaFig+\widthLayer+\distLayer,.3) --
        ++(\widthBlock,0) --
        ++(0,\heightBlock) --
        ++(-\widthBlock,0) -- 
        ++(0,-\heightBlock) --
        cycle
        {};
    
    \draw (\deltaFig+\widthLayer/2,-.2) -- (\deltaFig+\widthLayer+\distLayer,-.3) ;
    \draw (\deltaFig+\widthLayer/2,-.2-\heightBlockResidual) -- (\deltaFig+\widthLayer+\distLayer,-.3-\heightBlockResidual) ;
    
    \draw [fill=colorBlockStatic, draw=black, line width=\linewidthLayer]
        (\deltaFig+\widthLayer+\distLayer,-.3) --
        ++(\widthBlock,0) --
        ++(0,-\heightBlockResidual) --
        ++(-\widthBlock,0) -- 
        ++(0,\heightBlockResidual) --
        cycle
        {};
    %%%% Input %%%%
    \fill [colorInputDynamic,rounded corners=3.1, draw=black, line width=\linewidthLayer]
        (\deltaFig,0) --
        ++(\widthLayer,0) --
        ++(0,1) --
        ++(-\widthLayer,0) --
        cycle
        {};
    
    \fill [colorInputStatic,rounded corners=3.1, draw=black, line width=\linewidthLayer]
        (\deltaFig,-.2) --
        ++(0,-\heightBlockResidual) --
        ++(\widthLayer,0) --
        ++(0,\heightBlockResidual) --
        cycle
        {};
    
    %%%% pre-Output %%%%
    \draw (\deltaFig+2.5,.3) -- (\deltaFig+2.5+\distLayer + \widthLayer/2,0) ;
    \draw (\deltaFig+2.5,1.3) -- (\deltaFig+2.5+\distLayer + \widthLayer/2,\heightQ) ;
    
    \fill [colorPreOutput,rounded corners=3.1, draw=black, line width=\linewidthLayer]
        (\deltaFig+2.5+\distLayer,0) --
        ++(\widthLayer,0) --
        ++(0,\heightQ) --
        ++(-\widthLayer,0) --
        cycle
        {};
     
    \draw (\deltaFig+2.5,-.3) -- (\deltaFig+2.5+\distLayer + \widthLayer/2,-.3-\heightBlockResidual/2 + \widthLayer/2) ;
    \draw (\deltaFig+2.5,-.3-\heightBlockResidual) -- (\deltaFig+2.5+\distLayer + \widthLayer/2,-.3-\heightBlockResidual/2 - \widthLayer/2) ;
    
    \fill [colorPreOutput, draw=black, line width=\linewidthLayer]
        (\deltaFig+2.5+\distLayer+\widthLayer/2,-.3-\heightBlockResidual/2) circle (\widthLayer/2);
    
    %%%% Sign %%%%
    \draw (\deltaFig+2.5+ \distLayer + \widthLayer/2,0) -- (\deltaFig+2.5+\distLayer + \widthLayer/2+ \distLayer,-\rayonSign) ;
    \draw (\deltaFig+2.5+ \distLayer + \widthLayer/2,\heightQ) -- (\deltaFig+2.5+\distLayer + \widthLayer/2+ \distLayer,\rayonSign) ;
    
    \draw (\deltaFig+2.5+ \distLayer + \widthLayer/2,-.3-\heightBlockResidual/2 - \widthLayer/2) -- (\deltaFig+2.5+\distLayer + \widthLayer/2+ \distLayer,-\rayonSign) ;
    \draw (\deltaFig+2.5+ \distLayer + \widthLayer/2,-.3-\heightBlockResidual/2 + \widthLayer/2) -- (\deltaFig+2.5+\distLayer + \widthLayer/2+ \distLayer,\rayonSign) ;
    
    \fill [colorNonTrain, draw=black, line width=\linewidthLayer]
        (\deltaFig+2.5+2*\distLayer,0) circle (\rayonSign);
    
    \draw (\deltaFig+2.5+2*\distLayer,0) node[cross=\rayonSign*.4cm,rotate=0,black]{};
     %node[cross=2*\rayonSign,rotate=45,green]{};
    
    %\node[align=left] at (\deltaFig,-.2-\heightBlockResidual -.5) {MDA};
    \node[align=left] at (0.5*\deltaFig+0.5*2.5+0.5*3*\distLayer +0.5*\widthLayer/2,-.2-\heightBlockResidual -.7) {MDA};    
    %%%% Output %%%%
    \draw (\deltaFig+2.5+2*\distLayer + \widthLayer/2,-\rayonSign) -- (\deltaFig+2.5+3*\distLayer +\widthLayer/2,-\heightQ/2) ;
    \draw (\deltaFig+2.5+\distLayer + \widthLayer/2+ \distLayer,\rayonSign) -- (\deltaFig+2.5+3*\distLayer +\widthLayer/2,\heightQ/2) ;
    
    \fill [colorOutput,rounded corners=3.1, draw=black, line width=\linewidthLayer]
        (\deltaFig+2.5+3*\distLayer,-\heightQ/2) --
        ++(\widthLayer,0) --
        ++(0,\heightQ) --
        ++(-\widthLayer,0) --
        cycle
        {};
\end{tikzpicture}
\caption{Dense and Multiplicative Dueling architectures for the Q-network. The rectangles represent consecutive dense layers, the lightblue block being fed with all the features whereas the lightgreen one is only fed with static features (darkgreen). The output of the MDA is the product between a single unit and a $\abs{\JJ}$-unit dense layer.} 
\label{fig:MDA}
\end{figure}
\vspace{-.7cm}

%%%%%%%%%%%%%%%%%%%%%%%%%%%%%%%%%%%%%%%%%%%%%%%%%%%%%%%%%%%%%%%%
\section{Experiments and discussion}
%%%%%%%%%%%%%%%%%%%%%%%%%%%%%%%%%%%%%%%%%%%%%%%%%%%%%%%%%%%%%%%%

We test our algorithms on two sets of instances provided by Electricit\'e de France (EDF), a french electric utility company. They are drawn from two different problems, one is related to energy management in a microgrid ($\mathcal{P}_1$) whereas the other one comes from a hydroelectric valley ($\mathcal{P}_2$). The problems have respectively 186 and 282 constraints, 120 and 207 variables, and 54 and 96 binary variables.\\
We compare our algorithms to the default branching strategy of CPLEX (denoted CPLEX in the following) and full Strong Branching (denoted SB). We use CPLEX 12.7.1~\cite{manual_cplex} under DFS while turning off all presolving and cutting.\\
To avoid any dependency of our results to the train or the test set, we present cross-validated results. Algorithm~\ref{algo:FMSTS} is run 100 times independently on randomly partitioned train and test sets. Each time, 200 instances are used for training while 500 unseen instances are used for testing. Figure~\ref{fig:results} shows the averaged number of nodes in the complete B\&B trees on the test sets during the learning process: test instances are solved using the strategy learned on train instances at the current iteration of Algorithm~\ref{algo:FMSTS}.\\

\begin{figure}
    \centering
    \begin{tabular}{cc}
	\includegraphics[width=.5\textwidth]{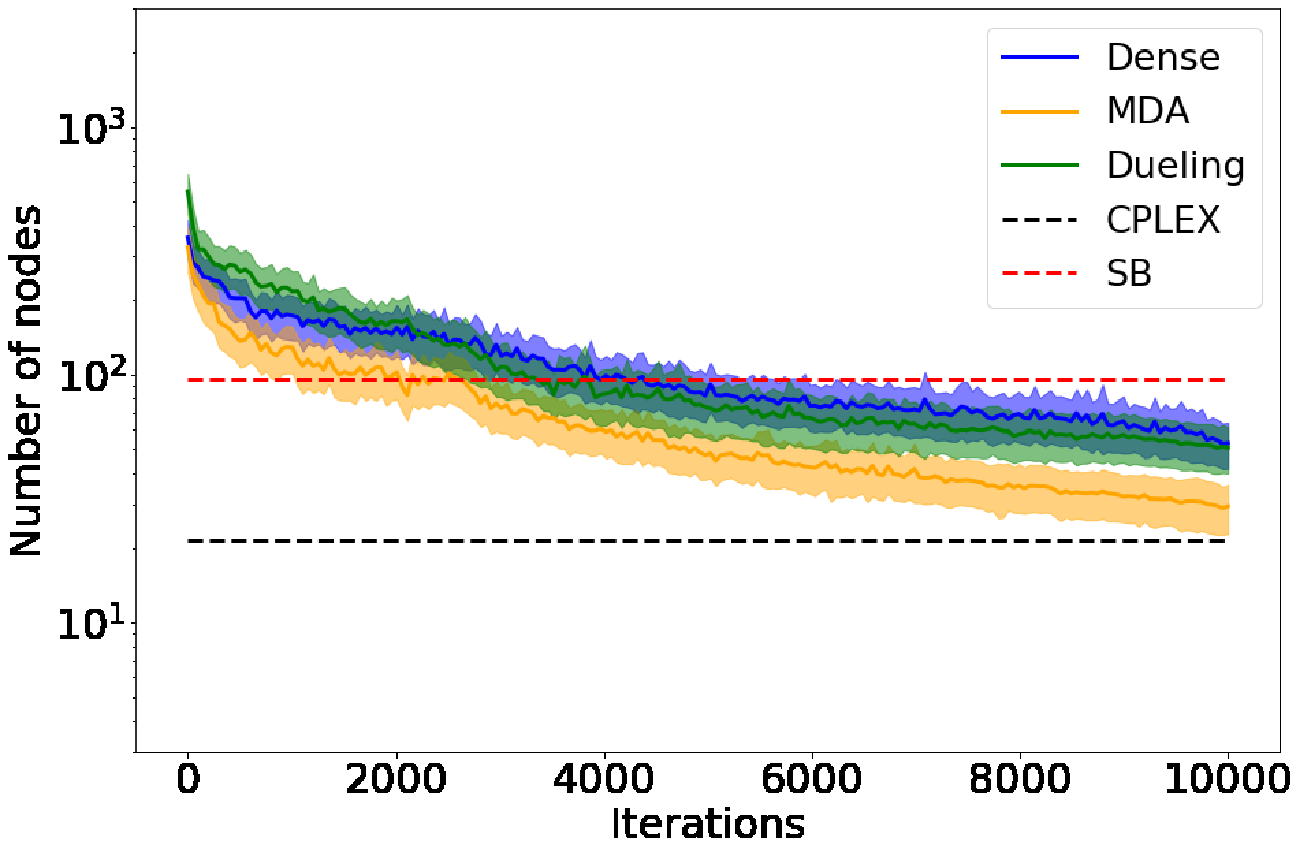}
 &  
     \includegraphics[width=.5\textwidth]{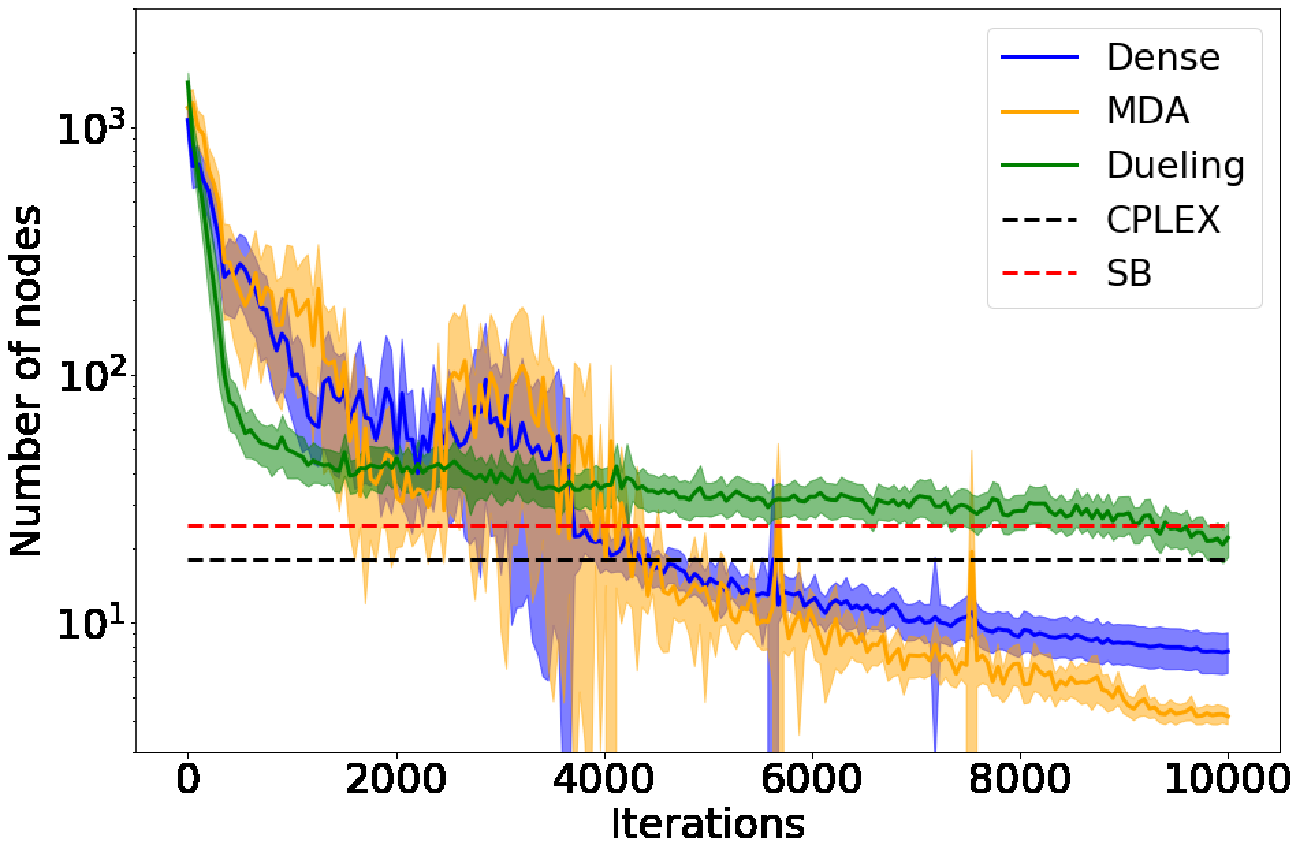}
\end{tabular}
\caption{Cross-validated performance on test instances (averaged number of nodes in log scale) for $\mathcal{P}_1$ (left) and $\mathcal{P}_2$ (right) through iterations of Algorithm~\ref{algo:FMSTS}. Gaussian confidence intervals are shown around the means.}
\label{fig:results}
\end{figure} 

As exhibited in Figure~\ref{fig:results}, our method is able to learn an efficient strategy from scratch. As expected, the MDA agent is more flexible than its additive counterpart (Dueling) inspired from the Dueling architecture of~\cite{dueling} and the fully-connected agent (Dense). It outperforms systematically the Strong Branching policy, and finds comparable or better strategies than CPLEX, depending on the problem. Results on training data are not displayed for the sake of conciseness, but it is worth mentioning that our agents do not overfit and are able to generalise well. In addition, the computation time is negligible compared with full Strong Branching as an action comes only at the price of a forward pass in our neural network. \\
Despite these good performances, some limits have to be pointed out at this stage. First, our framework requires DFS as the node selection strategy, which can be far from optimal for certain problems. Note that using another strategy may be complicated to handle due to more complex dependencies, but may also turn out to be effective as targetting small subtrees makes sense in general. Second, we only showed promising results on easy problems. With more difficult problems, the training becomes computationally prohibitive as a randomly initialised agent produces exponential trees. To tackle these limitations, we encompass different solutions such as fine-tuning the features and network architecture or using supervision to decrease the size of the generated trees during the first episodes. To reduce the cost of exploration, one could apply the same methodology with a set of branching heuristics as action set, similarly to what is proposed in~\cite{art:DASH}.

%%%%%%%%%%%%%%%%%%%%%%%%%%%%%%%%%%%%%%%%%%%%%%%%%%%%%%%%%%%%%%%%
\section{Conclusion}
%%%%%%%%%%%%%%%%%%%%%%%%%%%%%%%%%%%%%%%%%%%%%%%%%%%%%%%%%%%%%%%%

In this paper, we presented a novel Reinforcement Learning framework designed to learn from scratch the branching strategy in a B\&B algorithm. In addition to the specific metrics used in our FMSTS method, we introduced a new neural network architecture designed to tackle the multiplicative nature of the value function. Besides, we adapted some known RL techniques to the B\&B setting. We ran experiments on real-world problems to validate our method and showed better or comparable performances with existing strategies. \\
It is worthwhile to highlight that our method is generic enough to be applied to other metrics than the tree size, e.g. the number of simplex iterations or even the computation time. If one is not interested in proving optimality, many other value functions may be encompassed. Furthermore, it may be interesting to enlarge the scope of the method, especially to include Branch and Cut algorithms as they usually are more efficient.

\newpage
\bibliographystyle{ieeetr} % We choose the "plain" reference style
\bibliography{bib}

\begin{thebibliography}{10}

\bibitem{wolsey}
L.~A. Wolsey, {\em Integer programming}.
\newblock Wiley, 1998.

\bibitem{ROAI:tour_horizon}
Y.~Bengio, A.~Lodi, and A.~Prouvost, ``Machine learning for combinatorial
  optimization: a methodological tour d'horizon,'' {\em arXiv preprint
  arXiv:1811.06128}, 2018.

\bibitem{BranchAndPrice}
C.~Barnhart, E.~L. Johnson, G.~L. Nemhauser, M.~W. Savelsbergh, and P.~H.
  Vance, ``Branch-and-price: Column generation for solving huge integer
  programs,'' {\em Operations research}, vol.~46, no.~3, pp.~316--329, 1998.

\bibitem{BranchAndCut}
J.~E. Mitchell, ``Branch-and-cut algorithms for combinatorial optimization
  problems,'' {\em Handbook of applied optimization}, vol.~1, pp.~65--77, 2002.

\bibitem{learn_solution}
E.~Rachelson, A.~B. Abbes, and S.~Diemer, ``Combining mixed integer programming
  and supervised learning for fast re-planning,'' in {\em 2010 22nd IEEE
  International Conference on Tools with Artificial Intelligence}, vol.~2,
  pp.~63--70, IEEE, 2010.

\bibitem{khalil_learningBranch}
E.~B. Khalil, P.~Le~Bodic, L.~Song, G.~Nemhauser, and B.~Dilkina, ``Learning to
  branch in mixed integer programming,'' in {\em Thirtieth AAAI Conference on
  Artificial Intelligence}, 2016.

\bibitem{balcan_learningBranch}
M.-F. Balcan, T.~Dick, T.~Sandholm, and E.~Vitercik, ``Learning to branch,''
  {\em arXiv preprint arXiv:1803.10150}, 2018.

\bibitem{gasse_learningBranch}
M.~Gasse, D.~Ch{\'e}telat, N.~Ferroni, L.~Charlin, and A.~Lodi, ``Exact
  combinatorial optimization with graph convolutional neural networks,'' {\em
  arXiv preprint arXiv:1906.01629}, 2019.

\bibitem{learningCut}
Y.~Tang, S.~Agrawal, and Y.~Faenza, ``Reinforcement learning for integer
  programming: Learning to cut,'' {\em arXiv preprint arXiv:1906.04859}, 2019.

\bibitem{art:node_select_ML}
H.~He, H.~Daume~III, and J.~M. Eisner, ``Learning to search in branch and bound
  algorithms,'' in {\em Advances in Neural Information Processing Systems 27}
  (Z.~Ghahramani, M.~Welling, C.~Cortes, N.~D. Lawrence, and K.~Q. Weinberger,
  eds.), pp.~3293--3301, Curran Associates, Inc., 2014.

\bibitem{SuttonBarto}
R.~S. Sutton and A.~G. Barto, {\em Reinforcement learning: An introduction}.
\newblock MIT press, 2018.

\bibitem{DQN}
V.~Mnih, K.~Kavukcuoglu, D.~Silver, A.~A. Rusu, J.~Veness, M.~G. Bellemare,
  A.~Graves, M.~Riedmiller, A.~K. Fidjeland, G.~Ostrovski, {\em et~al.},
  ``Human-level control through deep reinforcement learning,'' {\em Nature},
  vol.~518, no.~7540, p.~529, 2015.

\bibitem{credit_assignment_minsky}
M.~Minsky, ``Steps toward artificial intelligence,'' {\em Proceedings of the
  IRE}, vol.~49, no.~1, pp.~8--30, 1961.

\bibitem{PER}
T.~Schaul, J.~Quan, I.~Antonoglou, and D.~Silver, ``Prioritized experience
  replay,'' {\em arXiv preprint arXiv:1511.05952}, 2015.

\bibitem{PCA}
K.~Pearson, ``{LIII. O}n lines and planes of closest fit to systems of points
  in space,'' {\em The London, Edinburgh, and Dublin Philosophical Magazine and
  Journal of Science}, vol.~2, no.~11, pp.~559--572, 1901.

\bibitem{dueling}
Z.~Wang, T.~Schaul, M.~Hessel, H.~Van~Hasselt, M.~Lanctot, and N.~De~Freitas,
  ``Dueling network architectures for deep reinforcement learning,'' {\em arXiv
  preprint arXiv:1511.06581}, 2015.

\bibitem{manual_cplex}
C.~U. Manual, {\em IBM ILOG CPLEX Optimization Studio}, 1987.

\bibitem{art:DASH}
G.~Di~Liberto, S.~Kadioglu, K.~Leo, and Y.~Malitsky, ``Dash: Dynamic approach
  for switching heuristics,'' {\em European Journal of Operational Research},
  vol.~248, no.~3, pp.~943--953, 2016.

\end{thebibliography}

\end{document}